\newtheoremstyle{itheorem}    
  {}  
  {}  
  {\itshape}  
  {}  
  {\bfseries}   
  {.}     
  { }     
  {#1\if!#3!\else\ \fi\thmnote{#3}}
\newtheoremstyle{icorollary}  
  {}  
  {}  
  {}          
  {}  
  {\itshape}    
  {.---}  
  {0pt}   
  {#1}
\newtheoremstyle{numcorollary}
  {}  
  {}  
  {}          
  {}  
  {\itshape}    
  {.}     
  {}     
  {#1\if!#3!\else\ \fi\thmnote{#3}}
\newtheoremstyle{idefinition} 
  {}  
  {}  
  {}          
  {}  
  {\bfseries}   
  {.---}  
  {0pt}   
  {}
\newtheoremstyle{ilemma}      
  {}  
  {}  
  {\itshape}  
  {}  
  {\bfseries}   
  {.---}  
  {0pt}   
  {#1\if!#3!\else\ \fi\thmnote{#3}}
\theoremstyle{definition}
\newtheorem*{lemma*}{Lemma.}
\newtheorem{lemma}{Lemma.}[section]
\theoremstyle{definition}
\newtheorem*{theorem*}{Theorem.}
\newtheorem{theorem}{Theorem.}[section]
\theoremstyle{definition}
\newtheorem*{problem*}{Problem.}
\newtheorem{problem}{Problem.}[section]
\theoremstyle{icorollary}
\newtheorem*{corollary*}{Corollary.}
\theoremstyle{numcorollary}
\newtheorem*{ncorollary*}{Corollary.}
\theoremstyle{idefinition}
\newtheorem*{definition*}{Definition.}
\newtheorem{definition}{Definition.}
\newtheorem*{definitions*}{Definitions.}
\newcommand{\mIntegral}   [3]   {\ensuremath{\int_{#1}^{#2} \! {#3}\, \mathrm{d}}}
\newcommand{\nIntegral}   [3]   {\ensuremath{\int_{#1}^{#2} \! {#3}\, }}
\newcommand{\mNorm}       [1]   {\ensuremath{\left\lVert#1\right\rVert}}
\newcommand{\mSam}        [1]   {\ensuremath{\stackrel{#1}{\sim}}}
\newcommand{\asteq}{\mathrel{*}=}
\newcommand*{\xMin}{0}%
\newcommand*{\xMax}{9}%
\newcommand*{\yMin}{0}%
\newcommand*{\yMax}{9}%
\DeclareMathOperator{\expct}{\mathbf{E}}
\newcommand{\bbG}{\mathbb{G}}
\newcommand{\bbN}{\mathbb{N}}
\newcommand{\bbR}{\mathbb{R}}
\newcommand{\calG}{\mathcal{G}}
\newcommand{\calW}{\mathcal{W}}
\tikzset{
	position label/.style={
		below = 3pt,
		text height = 1.5ex,
		text depth = 1ex
	},
	brace/.style={
		decoration={brace, mirror},
		decorate
	}
}
\tikzstyle{every picture}+=[remember picture]
\tikzstyle{na} = [shape=rectangle,inner sep=0pt]
\newcounter{scnt}
\newcounter{ecnt}
\newcommand{\myStrikeBegin}[1]{%
	\stepcounter{scnt}
	\tikz[baseline=(begin\the\value{scnt}.base)]\node[na](begin\the\value{scnt}){#1};
}
\newcommand{\myStrikeEnd}[1]{%
	\stepcounter{ecnt}
	\tikz[baseline=(end\the\value{ecnt}.base)]\node[na](end\the\value{ecnt}){#1};
	\begin{tikzpicture}[overlay]
		\draw (begin\the\value{ecnt}.west) -- (end\the\value{ecnt}.east);
	\end{tikzpicture}
}
\newcommand{\mydStrikeBegin}[1]{%
	\stepcounter{scnt}
	\tikz[baseline=(begin\the\value{scnt}.base)]\node[na](begin\the\value{scnt}){#1};
}
\newcommand{\mydStrikeEnd}[1]{%
	\stepcounter{ecnt}
	\tikz[baseline=(end\the\value{ecnt}.base)]\node[na](end\the\value{ecnt}){#1};
	\begin{tikzpicture}[overlay]
		\draw[->,red] (begin\the\value{ecnt}.west) -- (end\the\value{ecnt}.east);
	\end{tikzpicture}
}
\newcommand{\mArray}[3]{\ensuremath{[{#1}_{#2},\ldots,{#1}_{#3}]}}
\NewDocumentCommand{\setlist}{mm}
{
	\clist_clear_new:c { l_jens_#1_array_clist }
	\clist_set:cn { l_jens_#1_array_clist } { #2 }
}
\NewExpandableDocumentCommand{\listitem}{mm}
{
	\clist_item:cn { l_jens_#1_array_clist } { #2 }
}
\colorlet{commentColor}{red}
\colorlet{kw}          {blue}         % keyword color
\colorlet{type}        {pink}         % type color
\algnewcommand{\Break}{\textcolor{kw}{break}}
\algnewcommand{\Continue}{\textcolor{kw}{continue}}
\algnewcommand{\algorithmicclass}{\textcolor{kw}{class}}
\algnewcommand{\algorithmicstruct}{\textcolor{kw}{struct}}
\algnewcommand{\String}{\textcolor{type}{string}}
\algnewcommand{\UniquePtr}[1]{\textcolor{type}{unique\_ptr<#1>}}
\algnewcommand{\Int}{\textcolor{type}{int}}
\algrenewcommand{\algorithmicfunction} {\textcolor{kw}{function}}
\algrenewcommand{\algorithmicwhile}    {\textcolor{kw}{while}}
\algrenewcommand{\algorithmicfor}      {\textcolor{kw}{for}}
\algrenewcommand{\algorithmicif}       {\textcolor{kw}{if}}
\algrenewcommand{\algorithmicthen}     {\textcolor{kw}{then}  }
\algrenewcommand{\algorithmicelse}     {\textcolor{kw}{else}}
\algrenewcommand{\algorithmicend}      {\textcolor{kw}{end}}
\algrenewcommand{\algorithmiccomment} [1] {\textcolor{commentColor}{\# #1}}
\newcommand{\chapterauthor}[1]{%
	{\parindent0pt\vspace*{-25pt}%
		\linespread{1.1}\large\scshape#1%
		\par\nobreak\vspace*{35pt}}
	\@afterheading%
}
\begin{document}
	\thispagestyle{empty}
	
	\title{Efficient Evolutionary Models with Digraphons}
	
	\author
	{
		\centering
		Abhinav Tamaskar$^{1}$\footnote{Corresponding Author. Email: tamaskar@cims.nyu.edu} \and Bud Mishra$^{1}$ \\
		$^{1}$Courant Institute of Mathematical Sciences, NYU
	}
	
	\date{}
	
	\maketitle
	
	\begin{abstract}
	We present two main contributions which help us in leveraging the theory of graphons for modeling evolutionary processes. We show a generative model for digraphons using a finite basis of subgraphs, which is representative of biological networks with evolution by duplication\cite{ohno2013evolution}. We show a simple MAP estimate on the Bayesian non parametric model using the Dirichlet Chinese restaurant process representation, with the help of a Gibbs sampling algorithm to infer the prior. Next we show an efficient implementation to do simulations on finite basis segmentations of digraphons. This implementation is used for developing fast evolutionary simulations with the help of an efficient 2-D representation of the digraphon using dynamic segment-trees with the square-root decomposition representation. We further show how this representation is flexible enough to handle changing graph nodes and can be used to also model dynamic digraphons with the help of an amortized update representation to achieve an efficient time complexity of the update at $O(\sqrt{n}\log{n})$, where $n$ is the number of nodes in the digraph.
	\end{abstract}
	
	\section{Introduction}
Graphical models are one of the most important tools used in machine learning\cite{koller2009probabilistic} and arise in most applications which involve pairwise interactions, such as mutations in cancer evolution\cite{zhao2019cancer}, protein networks\cite{grzegorczyk2007extracting, werhli2006comparative}, hierarchical network models\cite{dempsey2021hierarchical}, influence in social networks\cite{mason2007situating,robins2001random,goyal2010learning}, population dynamics\cite{boer2012modeling} and many more. In machine learning, there are various techniques which forgo the use of these graphs and instead employ more algebraic representations to take advantage of the underlying theories, such as latent variable models\cite{loehlin1987latent}, network or dynamic models\cite{levin1976population}, deep neural networks (DNN)\cite{uppu2017tuning,miikkulainen2019evolving}, clustering models\cite{brannstrom2005role,chen2007bayesian}. The key advantage of the later techniques is the reliance of the abundance of techniques developed in linear algebra and optimized algorithms for doing fast implementations to get efficient real world analysis.

One of the main motivations of this study is the case of evolutionary populations, where the evolution is modeled as interactions between individuals of the populations, such as mutations, genotypic variations and phenotypic selection. In such cases, evolution of the gene regulatory network (GRN) or the protein-protein interaction (PPI) network happens by specific events, such as insertion, deletion, duplication, point mutations, translocation and inversion\cite{juan2008co,erwin2009evolution}. Of these the insertion, deletion and duplication events have the most noticeable effect on the networks and have an easily observable effect on the phenotype. If we think of genes as nodes in a graph and gene interactions as edges, these events can be thought of as an edge or node insertion, deletion or duplication.

When our network starts evolving and growing in size, we naturally think about what the outcome of such a process would be. As our network keeps on increasing in size, we need to extend our definition of a graph and naturally arrive at the intuition of a \emph{limit graph}, i.e. a graph on an infinite number of nodes, which we analyze not by looking at properties on non-empty subsets of the graph. These dynamics can be represented in terms of limit networks with the help of \emph{digraphons}. A \emph{digraphon} is measurable function $G:[0,1]^2\to[0,1]$. Given a digraphon $G$, there is a corresponding countably infinite exchangeable graph $\calG(\bbN,G)$, with the adjacency matrix $(\calG_{ij})_{i,j\in \bbN}$ defined by the generative model
\begin{align*}
    U_i &\mSam{\text{iid}} \text{Uniform}[0,1]~~ \forall ~ i\in \bbN \\
    \bbG_{ij}|U_i,U_j & \mSam{\text{ind}} \text{Bernoulli}(G(U_i, U_j))
\end{align*}

Thus the digraphons are an ideal object to use for a generative model for evolutionary networks. Any useful tool which is to be used for real world analysis must also support \emph{hypothesis refutability}, which necessitates the notion of similarity between digraphons. The space of digraphons have many norms defined on it, such as the $l_p$ norm
\[ \mNorm{G}_p = \left(\nIntegral{(i,j)\in[0,1]^2}{}{(G_{ij})^p}\right)^{\frac{1}{p}}\]
or the more interesting, \emph{cut norm}, which can be thought of as the maximum dissimilarity in a bounded region,
\[ \mNorm{G}_\square = \sup_{S,T\subseteq[0,1]}\nIntegral{S\times T}{}{G}\]

The cut metric is quite complex to calculate and can be approximated in finite real world scenarios with the \emph{agony} metric which is exponentially faster to compute\cite{tatti2014faster}.

These techniques for analyzing digraphons have recently been developed and have yet to see a wider use in conjunction with the standard Bayesian statistical tools prevalent in machine learning. The notions of limit graphs and asymptotic behaviours of evolutionary models are very important in using generative models from Bayesian statistics as the above model for digraphons seems to suggest an intuitive method for reasoning an approximation of the parameters.

The current black box learning from DNNs falls short of generating an explainable hypothesis which is needed for refutability. Indeed such scenarios have previously been observed such as the universal adversarial perturbations, which have then been leveraged to design adversarial networks. But these still fall short of an ideal answer.

The importance of these tools has been seen in many places, such as those used for signaling games\cite{lacroix2020evolutionary}, population dynamics and biomolecular networks, mesh network topologies\cite{de2007coverage}, 3-D neural imaging reconstruction\cite{hu2019topology}, etc. These techniques work directly in complement to the notions from deep neural networks by providing an explainable AI which can then be used as a hyperparameter in designing the DNNs by affecting their layer hierarchies, activation functions, dropout scenarios and memory length estimates.
	\section{Background - Graphons and Digraphons}

Let $\calW$ denote the space of all symmetric, bounded, measurable functions $\calW:[0,1]^2\to \bbR$. This is defined as the set of all \emph{kernels} , reminiscent of the kernels used in Support Vector Machines. If we restrict our attention to the set of functions $W\in \calW_0$ such that $0\le W\le 1$, we arrive at the space of \emph{graphons}. If we drop the condition of symmetry, we arrive at the space of digraphons.

For our purposes, we do not distinguish between the functions which are equal almost everywhere, as with most analytical scenarios, it is not possible to distinguish between such functions. We will soon see that this is actually not the only equivalence we want to put on the functions, as we will need to also equate a larger class of functions to each other for the sake of \textbf{exchangeability}, which is important for statistical modeling. 

The notion of graphons is an important one where we want to look at limits of graph sequences. Such graph sequences arise in a variety of different natural scenarios, such as social networks, recommendation systems for advertisements, shopping, etc., and also in biological scenarios, such as genetic mutations, evolutionary models, population dynamics. As a general rule of thumb, any scenario where we have a dynamic population with potential for growth in event space is a candidate for getting sequence of graphs.

Natural questions arise in how to analyze such a sequence. Does the growth follow a pattern? Are there noticeable features of this graph that are preserved across its growth? Does the graph sequence converge to any discernible end object? The theory of graphons (and digraphons) tries to answer many such questions in a rigorous form.

For us to have a notion of convergence and similarity, we need to start with a notion of a distance between digraphs. There are many distances defined on the space of digraphs, the distances introduced by the $L_p$ norms, nuclear norms, etc.; we will restrict our attention to the more interesting \emph{cut distance}.

\begin{definition}[Cut Distance]
 For two directed graphs $G, G'$, on the same set of vertices $V$, the cut distance is defined as 
 \[ d_\square(G, G') = \max_{S, T\subseteq V}\frac{e_G(S, T) - e_{G'}(S, T)}{|V|^2}\]
\end{definition} 

where $e_G(S, T)$ denotes the number of edges between $S$ and $T$ in the graph $G$.

If we let $d_1(G, G')$ be the $L_1$ distance on the adjacency matrices of $G$ and $G'$ we get the inequality $ d_\square(G, G') \le d_1(G, G') $. Hence we see that the two distances give differing information. As an example, for two Erdos-Renyi graphs with $p=1/2$, we get that $E[d_1(G ,G')] = 1/2$ while $E[d_\square(G, G')] = \theta(1/\sqrt{n})$. 

For unlabeled graphs on the same set of nodes, the intuitive extension to the cut distance is to define it via equivalence on node relabelings, which turns out to be the correct one. Let $G, G'$ be two graphs on same number of nodes, then the cut distance is overloaded as

\[ d_\square(G, G') = \min_{\phi\in\text{Hom}(G, G')}d_\square(\phi(G), G').\] 

Here we minimize over all homomorphims of $G$ into $G'$. For generalized measurable digraphons, we first define the \emph{cut norm}

\[ \mNorm{W}_\square = \sup_{S, T\subseteq[0,1]}\nIntegral{S\times T}{}{W}\]

Which can then be extended to the cut distance as $d_\square{W, W'} = \mNorm{W-W'}_\square$. Similar to the finite case, we achieve the inequalities between norms
\[ \mNorm{W}_\square \le \mNorm{W}_1 \le \mNorm{W}_p \le \mNorm{W}_{\infty} \le 1\]

Again, similar to the finite case of the cut distance, where we have ``relabelings'' via measure preserving homomorphisms of $\phi:[0, 1]\to[0 ,1]$.

 \[ d_\square(W, W') = \inf_{\phi\in\text{Hom}([0,1])} d_\square(\phi(W), W') \]
 
 It is important to note  that finite graphs can be represented as a specific case of a graphon by using step functions. For example, a digraph on $[n]$ with the adjacency matrix $A_{i,j}$ can be canonically viewed as a digraphon $W$, with $W(\frac{i}{n}, \frac{j}{n}) = A[i, j]$. This allows us to treat even finite graphs as a graphons and simplify our analysis, where we no longer have to distinguish between sequences of graphs vs sequence of graphons.
 
We see that any measure preserving transformation of a digraphon has zero cut-distance to the original.  An important theorem states that the only digraphons which have cut distance zero are the ones under measure preserving transformations of the original or of one which is equal almost everywhere, termed as weakly isomorphic pairs.

\begin{theorem}[Weak isomorphism theorem\cite{lovasz2012nondeterministic}]
    Let $W, W'$ be two digraphons, then $d_\square(W, W') = 0$ if and only if there exists a digraphon $Z$, such that $W = Z$ almost everywhere and $W'$ is a measure preserving homomorphism of $Z$\footnote{In particular this also covers the case where $W=Z$ everywhere.}.
 \end{theorem}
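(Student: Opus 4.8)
The plan is to prove the two implications separately, since the ``if'' direction is routine while the ``only if'' direction carries essentially all of the difficulty. For the easy direction, suppose a digraphon $Z$ exists with $W = Z$ almost everywhere and $W' = Z^\phi$ for some measure preserving $\phi \colon [0,1]\to[0,1]$, where $Z^\phi(x,y) = Z(\phi(x),\phi(y))$. I would bound $d_\square(W,W')$ by the triangle inequality through $Z$: because modifying a digraphon on a null set changes no integral of the form $\nIntegral{S\times T}{}{(\cdot)}$, we get $\mNorm{W - Z}_\square = 0$; and because, as already observed in the text, any measure preserving transformation of a digraphon has zero cut distance to the original, we get $\mNorm{Z - W'}_\square = 0$. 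Combining these gives $d_\square(W,W') = 0$.

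For the hard direction, assume $d_\square(W,W') = 0$. The strategy is to pass from the cut metric to homomorphism densities and then invoke a moment-uniqueness argument to recover the common digraphon $Z$. Writing $t(F,W)$ for the density of a finite digraph $F$ in $W$ (the expected fraction of maps $V(F)\to[0,1]$ respecting all directed adjacencies, integrated against the sampling measure of $\calG(\bbN,W)$), the first step is the directed Counting (Sampling) Lemma, which yields $|t(F,W) - t(F,W')| \le e(F)\, d_\square(W,W')$ for every finite digraph $F$, with $e(F)$ the number of edges. Since $d_\square(W,W') = 0$, this forces $t(F,W) = t(F,W')$ for all $F$; equivalently, the countable exchangeable random digraphs $\calG(\bbN,W)$ and $\calG(\bbN,W')$ have identical finite-dimensional distributions.

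The second and decisive step is to run the moment-uniqueness theorem in reverse: equality of all homomorphism densities forces $W$ and $W'$ to be weakly isomorphic. I would first pass to twin-free (``pure'') representatives, obtained by identifying points $x,x'$ whose section functions $W(x,\cdot)$ agree almost everywhere and pushing the measure forward; this quotient is canonical, leaves every density unchanged, and lets one take $Z$ to agree with $W$ off a null set. On the pure representatives I would build the identification by a martingale argument: feed the sequence of partitions supplied by the weak regularity lemma into conditional expectations, record the resulting step-function approximations, and apply martingale convergence so that the two moment-matched sequences converge to a single limit, which serves as $Z$. Finally, since the underlying spaces are standard atomless probability spaces and any two such are isomorphic modulo a null set, the relation between $Z$ and $W'$ can be realized as a genuine measure preserving map $\phi$ on $[0,1]$, giving $W = Z$ almost everywhere and $W' = Z^\phi$ as required.

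The main obstacle is this last step: converting the abstract statement that the two random digraphs are equidistributed into the concrete structural conclusion with an explicit measure preserving map. The delicate points are the purification reduction, which must be shown to preserve both densities and the cut metric, and the measure-theoretic identification of the quotient spaces, which relies on the regularity of standard Borel spaces rather than on any closed-form construction. By contrast, the passage from cut distance to moments via the Counting Lemma, and the easy direction, are soft and short.
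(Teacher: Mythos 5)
The paper does not actually prove this theorem: it is imported from Lov\'asz's work with a citation, so there is no internal argument to compare yours against. Judged on its own terms, your outline follows the standard route from the graph-limits literature (a directed counting lemma to equate all homomorphism densities, then the moment/uniqueness theorem of Borgs--Chayes--Lov\'asz to upgrade equal densities to weak isomorphism), and the easy direction is essentially right --- though note that what vanishes there is the relabeling-invariant cut \emph{distance} $d_\square(Z,W')$, not the cut \emph{norm} $\mNorm{Z-W'}_\square$ of the difference, which for $W'=Z^\phi$ need not be zero; the triangle inequality must be run in the metric $d_\square$, not the norm.

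For the hard direction, the proposal is a table of contents rather than a proof: the one step carrying all the weight --- that $t(F,W)=t(F,W')$ for every finite digraph $F$ forces weak isomorphism --- is only named. The purification and the ``martingale argument'' you gesture at are exactly where the work lives: one must show that the quotient by twins is a standard atomless space onto which $W$ descends, that the quotient map is measure preserving and leaves both densities and cut distance unchanged, and that equidistribution of the sampled countable random digraphs identifies the two pure quotients. As written these are placeholders. Separately, the argument you describe naturally yields the symmetric conclusion --- a single $Z$ with $W=Z^{\psi}$ and $W'=Z^{\phi}$ almost everywhere for two measure preserving maps --- whereas the statement as printed asks for the one-sided form $W=Z$ a.e.\ with only $W'$ pulled back. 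Bridging that asymmetry (or recognizing that the printed statement is a loose paraphrase of the symmetric theorem) is not addressed in your sketch. None of this is a wrong turn, but the proposal should be read as a correct plan for the known proof, not as a proof.
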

  
This result is one of the most important ones for the analysis of digraphons, as this gives confidence to our sampling algorithms. Indeed, the proof of this theorem itself relies on the canonical digraphon and sampling state introduced. Then we generalize the distance for two digraphons $W, W'$ by round robin chasing across the commutative diagram, 
\[d_\square(W, \text{Sample}(W)) \leftrightarrow d_\square(W, W') \leftrightarrow d_\square(W', \text{Sample}(W'))\]

\begin{lemma}[Convergence in norm]
	Let $W_n, n= 1,2,\ldots,$ be a sequence of digraphons such that $\mNorm{W_n}_\square\to 0$. Then for all dikernels $Z$, we have that $\mNorm{W_nZ}_\square\to 0$.
\end{lemma}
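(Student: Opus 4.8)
The plan is to prove the quantitative estimate
\[ \mNorm{W_n Z}_\square \le 2\,\mNorm{Z}_\infty\,\mNorm{W_n}_\square, \]
after which the conclusion is immediate: $Z$ is a fixed dikernel, so $\mNorm{Z}_\infty$ is a finite constant, and $\mNorm{W_n}_\square\to 0$ by hypothesis forces the left side to $0$. Here $W_nZ$ denotes the operator (kernel) product $(W_nZ)(x,z)=\int_0^1 W_n(x,y)Z(y,z)\,dy$; before anything else I would note that this is again a bounded measurable dikernel, since the integrand is bounded and measurable and $|W_nZ|\le\mNorm{W_n}_\infty\mNorm{Z}_\infty\le\mNorm{Z}_\infty$, with joint measurability supplied by Fubini--Tonelli.

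First I would fix measurable sets $S,T\subseteq[0,1]$ and unfold the integral defining the cut norm. Swapping the order of integration gives
\[ \int_{S\times T}(W_nZ) = \int_0^1\Big(\int_S W_n(x,y)\,dx\Big)\Big(\int_T Z(y,z)\,dz\Big)\,dy = \int_0^1 a(y)\,b(y)\,dy, \]
where $a(y)=\int_S W_n(x,y)\,dx$ and $b(y)=\int_T Z(y,z)\,dz$. The purpose of this rewriting is to confine all dependence on $W_n$ to the factor $a$, while the fixed kernel $Z$ enters only through $b$, which is controlled trivially: since $T\subseteq[0,1]$ we have $|b(y)|\le\mNorm{Z}_\infty$ for every $y$.

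The heart of the argument is the claim that $\int_0^1|a(y)|\,dy\le 2\,\mNorm{W_n}_\square$, uniformly in $S$. To see this I would introduce $g(y)=\operatorname{sign}\big(a(y)\big)$, so that, using Fubini once more,
\[ \int_0^1|a(y)|\,dy = \int_0^1 g(y)\,a(y)\,dy = \int_S\int_0^1 W_n(x,y)\,g(y)\,dy\,dx. \]
Writing $g$ as the difference of the indicators of $T^+=\{a>0\}$ and $T^-=\{a<0\}$ turns the right-hand side into $\int_{S\times T^+}W_n-\int_{S\times T^-}W_n$, and each term is at most $\mNorm{W_n}_\square$ in absolute value by definition of the cut norm, so the total is at most $2\,\mNorm{W_n}_\square$. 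Combining this with $|b|\le\mNorm{Z}_\infty$ yields $\big|\int_{S\times T}(W_nZ)\big|\le 2\,\mNorm{Z}_\infty\,\mNorm{W_n}_\square$, and taking the supremum over $S,T$ gives the displayed inequality.

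I expect the only genuine obstacle to be this middle step: because $Z$ is an arbitrary (merely $L^\infty$) kernel, the factor $b(y)$ is not an indicator, so one cannot apply the cut-norm bound to $W_nZ$ directly. The resolution is exactly the reduction above -- absorb $Z$ through the crude bound $|b|\le\mNorm{Z}_\infty$ and then control the $L^1$-norm of the one-sided partial integral $a$ via the sign trick. Conceptually this is just the standard fact that the cut norm is comparable to the $L^\infty\!\to\!L^1$ operator norm of the associated integral operator. A more abstract alternative would observe that $T_{W_nZ}=T_{W_n}\circ T_{Z}$ and estimate $\mNorm{W_nZ}_\square\le\mNorm{T_{W_n}}_{\infty\to1}\,\mNorm{T_Z}_{\infty\to\infty}\le 4\,\mNorm{Z}_\infty\,\mNorm{W_n}_\square$, but the self-contained computation above is cleaner and yields the better constant.
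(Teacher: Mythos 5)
Your proof is correct, and there is nothing in the paper to compare it against: the lemma is stated without proof, as an imported background fact from the graph-limits literature (Lov\'asz et al.). Your argument---reduce $\int_{S\times T}(W_nZ)$ via Fubini to $\int_0^1 a(y)b(y)\,dy$ with $a(y)=\int_S W_n(x,y)\,dx$ and $b(y)=\int_T Z(y,z)\,dz$, bound $|b|\le\mNorm{Z}_\infty$ crudely, and control $\int|a|$ by splitting on the sign of $a$---is exactly the standard proof, i.e.\ the identification of the cut norm with the $L^\infty\!\to\!L^1$ operator norm up to constants, and every step checks out (measurability of $T^\pm$ follows from measurability of $a$, which Fubini supplies). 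Two minor remarks. First, since the $W_n$ are digraphons and hence nonnegative, $a\ge 0$ everywhere, so $T^-$ is empty and your constant $2$ improves to $1$. Second, the paper's displayed definition of $\mNorm{\cdot}_\square$ omits the absolute value around $\int_{S\times T}W$, so your step asserting that each of $\int_{S\times T^\pm}W_n$ is at most $\mNorm{W_n}_\square$ in absolute value \emph{by definition} tacitly uses the standard two-sided definition; this is harmless here precisely because $W_n\ge 0$, but if the lemma is meant to hold for general dikernels $W_n$ (as the statement's phrasing of $Z$ suggests it might), the two-sided definition is the one you need.
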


\subsection{Sampling}

One of the important parts of digraphons are the guarantees that a finite sampling is going to converge to the underlying digraphon. The sampling works as follows.\\

Given a digraphon $W$ and an ordered set $S=(x_1, \ldots, x_n), x_i\in[0,1]$, we create a weighted digraph $H(S, W)$ on the node set $[n]$ with the edge weights $H(i, j) = W(x_i, x_j)$. Now from such an $H$ we can create a random simple unweighted digraph by trying to sample $G$ and adding an edge $G(i, j)$ with probability $H(i, j)$. \\

For example, if $W$ is the uniform function with $W(i, j) = p, 0 \le p \le 1$, we would get the standard Erdos-Renyi graphs with probability $p$. If $W = W_G$, the canonical digraphon for a digraph  $G$, then if we sample $k$ points from $W_G$, it is the ``almost'' the same as calculating a random subgraph of $G$. It is not the same as we might have sampled $x_i, x_j$ from the same step in $W_G$. To have an exact subgraph sampling, we need to condition on the fact that $x_i, x_j$ need to be from different steps in $W_G$. In particular, we are removing sequences$(x_1, \ldots, x_n)$, with repetitions, which has $\binom{k}{2}$ such sequences, and hence a measure of $\frac{\binom{k}{2}}{n}$. This gives us that the average distance between a randomly chosen subgraph, $R(k, G)$ and a randomly sampled digraphon$R(k, W_G)$ is
\[ d(R(k, G), R(k, W_G)) \le \binom{k}{2}\frac{1}{n}\]

Now that we know how to sample, we can start looking at how sampling helps in parameter estimation. For such a scenario, we want to start with a notion of ``good'' parameters, which we can hope to be estimable. As it will turn out, most of the real world scenarios are going to be good and can be estimated using sampling. We wish to achieve some notions similar to the limit theorems for classical statistics which will give us confidence on doing real world analytics using EM or MAP algorithms for estimations.

A \emph{reasonably smooth} graph parameter is defined as a function $f(G([S]))$, which is to say of a sampling of a digraphon, which satisfies that $|f(G)-f(G')| < 1$, for two graphs $G, G'$ on the same set of nodes, whose edges differ only for a single vertex. We then achieve a sample concentration theorem

\begin{theorem}[Sample concentration theorem for digraphons\cite{lovasz2012large}]
	\hfill\\
	Let $f$ be a reasonably smooth graph parameter, and let $W$ be a digraphon, $k > 1 \in \bbN$. Let $f_0=\expct[f(R(k, W_G))]$, then for all $t > 0$, 
		\[ \Pr\left[f(R(k, W_G)) > f_0 + \sqrt{2tk}\right] < e^{-t}\]
\end{theorem}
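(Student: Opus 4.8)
The plan is to prove this as a standard concentration-of-measure statement via the method of bounded differences, i.e.\ an Azuma--Hoeffding bound on a suitable Doob martingale. The random sample $R(k, W_G)$ is generated from finitely many independent sources of randomness, and the reasonably smooth hypothesis is precisely a one-vertex Lipschitz condition on $f$; these two facts together drive the whole argument.

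First I would encode the sampling as a function of $k$ independent blocks. For each vertex $i$ let $Z_i$ collect the type $U_i \mSam{\text{iid}} \text{Uniform}[0,1]$ together with the independent edge-decision variables used to resolve the directed edges between $i$ and every earlier vertex $j < i$ (assigning each such edge to the later of its two endpoints). The variables $Z_1, \ldots, Z_k$ are then mutually independent, and the sampled digraph, hence the value $f(R(k, W_G))$, is a deterministic function $F(Z_1, \ldots, Z_k)$. Next I would set up the vertex-exposure martingale $X_i = \expct[\,f(R(k,W_G)) \mid Z_1, \ldots, Z_i\,]$, so that $X_0 = f_0$ and $X_k = f(R(k,W_G))$.

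The crux is bounding the martingale increments. Writing $X_i - X_{i-1}$ as the difference between $F$ averaged over $Z_{i+1}, \ldots, Z_k$ and its further average over a resampled copy $Z_i'$, the standard resampling estimate bounds $|X_i - X_{i-1}|$ by the pointwise change in $f$ caused by replacing $Z_i$ with $Z_i'$ while all other blocks are held fixed. Here is the delicate point I expect to be the main obstacle: altering $Z_i$ changes the type $U_i$, which perturbs the edge probabilities $W(U_i, U_j)$ and $W(U_j, U_i)$ for every $j$, so edges in both directions incident to $i$ may flip — yet every affected edge is incident to the single vertex $i$. Thus the two digraphs differ only in edges at one vertex, and the reasonably smooth hypothesis gives $|X_i - X_{i-1}| < 1$. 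The directed bookkeeping, namely that no edge avoiding $i$ is disturbed, is what has to be checked carefully.

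Finally I would invoke Azuma--Hoeffding with increment bounds $c_i < 1$, so that $\sum_{i=1}^k c_i^2 < k$, obtaining $\Pr[\,f(R(k,W_G)) - f_0 \ge s\,] < \exp\left(-s^2/(2k)\right)$ for every $s > 0$. Substituting $s = \sqrt{2tk}$ collapses the exponent to $-t$ and yields exactly $\Pr[\,f(R(k,W_G)) > f_0 + \sqrt{2tk}\,] < e^{-t}$, as claimed; note that the strict increment bound $c_i < 1$ is what produces the strict inequality in the conclusion.
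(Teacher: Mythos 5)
Your proof is correct and is essentially the argument behind the cited result: the paper states this theorem without giving its own proof (deferring to Lov\'asz), and the standard proof there is exactly your Azuma--Hoeffding bound on the vertex-exposure (Doob) martingale, with the one-vertex smoothness hypothesis bounding each increment by $1$ and the substitution $s=\sqrt{2tk}$ giving the exponent $-t$. The directed bookkeeping you flag --- that resampling $U_i$ together with the edge coins assigned to vertex $i$ disturbs only edges incident to $i$, in either orientation --- is handled correctly.
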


This theorem gives credibility to the fact that our intuitive sampling algorithms are going to be working correctly on standard simulations. In fact, we can achieve an even better result which states.

\begin{theorem}[Cut distance confidence\cite{lovasz2012large}]
	Let $k > 1 \in \bbN$ and let $G$ be a digraph on $k$ nodes. Then with probability at least $1-\exp{(-\frac{k}{2\log k})}$, we have that
	\[ d_\square(G, R(k, W_G)) \le \frac{20}{\sqrt{\log k}}\]
\end{theorem}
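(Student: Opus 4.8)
The plan is to prove the statement for an arbitrary digraphon in place of $W_G$, since nothing about $G$ beyond $d_\square(G, W_G) = 0$ is used. Because $W_G$ is the canonical digraphon of $G$, the pair $(G, W_G)$ is weakly isomorphic, so $d_\square(G, W_G) = 0$; as $d_\square$ is a pseudometric satisfying the triangle inequality, this reduces the claim to bounding $d_\square(W_G, R(k, W_G))$. First I would split this quantity into its mean and its fluctuation: writing $F = d_\square(W_G, R(k, W_G))$, a function of the iid uniform sample points $X_1, \dots, X_k$ and the independent Bernoulli edge variables, it suffices to show (i) $\expct[F] \le C_1/\sqrt{\log k}$ and (ii) $\Pr[F > \expct[F] + C_2/\sqrt{\log k}] \le \exp(-k/(2\log k))$ with $C_1 + C_2 \le 20$.

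Step (ii) is the routine part and is of the same flavour as the Sample concentration theorem above. Resampling a single point $X_i$ alters only one row and one column of the sampled adjacency matrix and hence changes $F$ (normalised by $k^2$) by at most $2/k$, while flipping a single Bernoulli edge indicator changes it by only $1/k^2$. A bounded-difference (Azuma) estimate, dominated by the $k$ sample coordinates, then gives $\Pr[F \ge \expct[F] + \eta] \le \exp(-\eta^2 k/2)$; choosing $\eta = 1/\sqrt{\log k}$ yields exactly the stated tail $\exp(-k/(2\log k))$. Note that the fluctuation this produces is already of order $1/\sqrt{\log k}$, the same order as the target bound, so concentration alone cannot be pushed any lower --- the rate is genuinely set by the mean.

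Step (i) is the crux, and this is where the $1/\sqrt{\log k}$ rate originates, through the weak (Frieze--Kannan) regularity lemma. The idea is to approximate $W_G$ in cut norm by a step digraphon $W_{\calP}$ on $q$ parts with $\mNorm{W_G - W_{\calP}}_\square \le C/\sqrt{\log q}$, and then couple the sample of $W_G$ with the corresponding sample of $W_{\calP}$. For a step function the only data the sample must recover are the $q^2$ block densities, and by concentration the empirical block densities are accurate in cut norm provided the sample places enough points in each part, which holds as long as $q^2 \lesssim k$. Balancing the regularity error $1/\sqrt{\log q}$ against this sampling constraint leads to the choice $q \approx \sqrt{k}$, for which $1/\sqrt{\log q} \approx \sqrt{2}/\sqrt{\log k}$, and gives $\expct[F] \le C_1/\sqrt{\log k}$.

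The main obstacle is precisely Step (i): controlling the expected cut distance. Two points need care. First, because $d_\square$ takes an infimum over relabelings, the coupling must exhibit an explicit favourable alignment of the sampled nodes with their underlying coordinates $X_i$ (sorting the sampled points into the parts of $\calP$), rather than an arbitrary ordering; the freedom to relabel is exactly what makes the block-density comparison legitimate. Second, the regularity approximation and the sampling error must be traded off uniformly in $W_G$, so that all constants collapse into the single bound $20/\sqrt{\log k}$; keeping these constants honest through the weak regularity lemma and the block-wise concentration is the fiddly bookkeeping of the argument.
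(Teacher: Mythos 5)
The paper does not prove this statement at all: it is quoted verbatim from the cited reference (Lov\'asz, \emph{Large Networks and Graph Limits}) as background, so there is no in-paper proof to compare against. Your outline reproduces the standard argument from that source --- reduce to $d_\square(W_G, R(k,W_G))$, bound the expectation via the Frieze--Kannan weak regularity lemma with the number of parts tuned so that $1/\sqrt{\log q}$ balances the block-sampling error, and control the fluctuation by a bounded-differences (Azuma/McDiarmid) estimate over the $k$ sample coordinates, whose rate $\exp(-\eta^2 k/2)$ with $\eta = 1/\sqrt{\log k}$ gives exactly the stated tail. The sketch is correct in structure; what remains open is only what you already flag, namely the explicit coupling of the sampled step function with the sample of $W_G$ and the constant bookkeeping needed to land on $20/\sqrt{\log k}$.
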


Now with such a result, we can employ our standard statistical tools to carry out prior/posterior estimations and simulations using generative models. Indeed, we develop a maximum a posteriori (MAP) estimation algorithm for the Dirichlet priors on digraphon generative models and due to this result, we can be somewhat confident in the fact that we are achieving a good result, subject to proper maximum optimizations in the log likelihood estimation.
	\section{Evolution by Duplication}
\label{sec:dg_model}

In many scenarios, preferential attachment happens at a larger scale than a single vertex, where multiple parts, aka clusters, of the network get duplicated. This process is hard to model in a one shot setting where we have to duplicate the parts one at a time. For added robustness, we expand the definition of \emph{preferential attachment} to allow duplicating subgraphs. 

We specifically want to model graphs for evolutionary events, such as graphical models and causal networks. Each node represents an event that can take place, while each edge (potentially weighted) represents the \emph{influence} from an event to another. We call this the \emph{event graph} and represent this as a weighted adjacency matrix. An example of such a network that is the Suppes Bayes causal network (SBCN). 

We define evolution by duplication as an extension of the preferential attachment model, where we allow extending the graph by duplicating a larger subgraph. This subsumes the original case where preferential attachment of a single node and allows for a larger, more robust evolutionary model.

\begin{definition}[Evolution by Duplication]
Let $G$ be a directed graph, and let $X = [a_i,\ldots, a_j] \times [b_k,\ldots, b_l] \subset [1,\ldots,n]^2$. A \textbf{preferential attachment} of $X$ on $G$, with the weight function $\theta$, is the new digraphon $G'$, defined by
\[ G' = (1-\theta(X))\cdot G + \theta(X)\cdot G|_{X}\]
\end{definition}

Typically, we want to randomly select segments which are going to be attached. This is carried out with the help of a weight function $\theta$, which is inversely proportional to the measure of the attached segment, $\theta(X)\propto \frac{1}{\nIntegral{X}{~}{G}} = \frac{1}{\sum_{\substack{x\in[a_i,\ldots,a_j] \\ y \in [b_k,\ldots, b_l]}} w(a_x, b_y)}$. This weighting implies that smaller segments are easier to attach than larger segments, which mimics the biological characteristics, wherein it is easier to duplicate smaller, simpler networks over complex multi-path networks.

\begin{figure}[ht]
    \centering
    \begin{tikzpicture}[scale=0.5]
    \foreach \i in {\xMin,...,\xMax} {
        \draw [very thin,gray] (\i,\yMin) -- (\i,\yMax);
    }
    \foreach \i in {\yMin,...,\yMax} {
        \draw [very thin,gray] (\xMin,\i) -- (\xMax,\i);
    }
    \draw[very thick, blue] (0,0) -- (0,6) -- (6,6) -- (6,0) -- (0,0);
    \draw[very thick, yellow] (4,4) -- (4,9) -- (9,9) -- (9,4) -- (4,4);
    \draw[very thick, red] (5,0) -- (5,5) -- (9,5) -- (9,0) -- (5,0);
    \draw[very thick, green] (0,5) -- (0,9) -- (5,9) -- (5,5) -- (0,5);
    \end{tikzpicture}
    \caption[Segment basis for a digraph.]{Segment basis for a digraph.\\ The colored rectangles denote \emph{segments} for a digraph, which can be duplicated by \emph{preferential attachment}. It is not necessary for the segments to be non-intersecting, the overlapped regions will be attached multiple times through different segments. Each segment also has an associated weight, which influences the evolutionary priority of duplicating that segment.}
    \label{fig:dg_segment_digraph}
\end{figure}

Due to the exponentially large size of the segment space, it is preferred to have a smaller finite collection of segments of interest. In biological networks, such as those for cancer somatic mutations, we typically want to restrict our attention to either driver genes or those known to interact with cancer mutations up to some extent. For example, The Cancer Genome Atlas (TCGA) program or the Catalogue of Somatic Mutations in Cancer (COSMIC) databases are the main places which give information about important genes.

\begin{definition}[Segmented Digraph]
A \emph{segmented} digraph, \cref{fig:dg_segment_digraph}, is a digraph with a finite collection of weighted segments $(X_i, w_i)$, $X_i = [a_i, b_i]\times [c_i, d_i]$,  with weights $w_i\in \bbR_+, \sum w_i = 1$.
\end{definition}

We can also allow dynamic sized vertices by allowing an attachment to introduce a node, thereby adding a row or a column to our adjacency matrix. For our current analysis, we restrict ourselves to static sized attachments, which only affect weights of the graph.

It is important to note that this is an exchangeable process; relabeling the segments, $X_i$, does not lead to a different simulation, we only depend upon the weights of the segments. Thus we can leverage all the theoretical properties of digraphons, such as asserting that any such evolutionary process, wherein we do simulations either by attachments to make larger graphs, or by performing weighted boosting, is going to converge to a final digraphon.

\begin{figure}[ht]
    \centering
    \begin{tikzpicture}
    \node(a1) at (-1,0) {a};
    \node(a2) at (1,0) {b};
    \node(a3) at (-1,-1) {c};
    \node(a4) at (0,-1) {d};
    \node(a5) at (1,-1) {e};
    \node(a6) at (-1,-2) {f};
    \node(a7) at (1,-2) {g};
    \node(a8) at (0,-3) {h};
    \draw[->, thin] (a1) -- (a3);
    \draw[->, thin] (a1) -- (a4);
    \draw[->, thin] (a2) -- (a4);
    \draw[->, thin] (a2) -- (a5);
    \draw[->, thin] (a3) -- (a6);
    \draw[->, thin] (a4) -- (a6);
    \draw[->, thin] (a4) -- (a7);
    \draw[->, thin] (a5) -- (a7);
    \draw[->, thin] (a6) -- (a8);
    \draw[->, thin] (a7) -- (a8);
    
    \draw[->, thick] (1.5, -1.5) -- (3.5 ,-1.5);
    
    \node(b1) at (4,0) {a};
    \node(b2) at (6,0) {b};
    \node(b3) at (4,-1) {c};
    \node(b4) at (5,-1) {d};
    \node(b5) at (6,-1) {e};
    \node(b6) at (4,-2) {f};
    \node(b7) at (6,-2) {g};
    \node(b8) at (5,-3) {h};
    \draw[->, thin] (b1) -- (b3);
    \draw[->, thin] (b1) -- (b4);
    \draw[->, thin] (b2) -- (b4);
    \draw[->, thin] (b2) -- (b5);
    \draw[->, thin] (b3) -- (b6);
    \draw[->, very thick, red] (b4) to [bend left=20] (b6);
    \draw[->, very thick, red] (b4) to [bend right=20] (b6);
    \draw[->, very thick, red] (b4) to [bend left=20] (b7);
    \draw[->, very thick, red] (b4) to [bend right=20] (b7);
    \draw[->, thin] (b5) -- (b7);
    \draw[->, very thick, red] (b6) to [bend left=20] (b8);
    \draw[->, very thick, red] (b7) to [bend left=20] (b8);
    \draw[->, very thick, red] (b6) to [bend right=20] (b8);
    \draw[->, very thick, red] (b7) to [bend right=20] (b8);
    \end{tikzpicture}
    \caption[Evolution by subgraph attachment via duplication.]{Evolution by subgraph attachment via duplication.\\ Preferential attachment of the segment $[d,f,g,h]^2$. The same effect can be seen for more than one segment attachment, as evidenced by preferentially attaching $[d, f, g] \times [f, g, h]$.}
    \label{fig:dg_pattach}
\end{figure}

	\section{Finite Modeling and Implementations}
\label{sec:dg_impl}

To model such an evolutionary process is non trivially complex, as the graphs can have large number of nodes and edges. Naive algorithms for doing \emph{preferential attachment} of a segment $[i, j]\times [k, l]$, have a time complexity of $(j-i)\cdot(k-l) = O(n^2)$. In addition to that, to do a weighted sample, we need to get the current weight of a segment, which would also take $O(n^2)$. This strategy is feasible in cases of small segments, but has an undesirable asymptotic behaviour, which we improve upon. In this section, we detail an efficient data structure for simulating the preferential attachment framework on a digraph. 

There are two key operations that we want our model to access:

\begin{enumerate}
    \item Get the current weight of a segment $X$.
    \item \emph{Preferentially Attach} a segment, $X$, with a particular weight, $\theta(X)$.
\end{enumerate}

We note that we can perform (2) in two steps - (a) Multiply the whole graph by $(1-\theta(X))$, (b) Multiply the segment $X$ by $\frac{1}{1 - \theta(x)}$.

Hence, our data structure needs to support the following operations:

\begin{enumerate}
    \item Multiply a segment $[a, b] \times [c, d]$ by some value $c$.
    \item Return the sum of all elements in a segment $[a, b] \times [c, d]$.\\
    For a finite model implementation, we assume that the size of the digraph is fixed.
\end{enumerate}

\begin{figure}
    \centering
    \begin{tikzpicture}[scale=0.5]
    \draw[fill=yellow!90] (1,2) -- (1,3) -- (6,3) -- (6,2);
    \draw[fill=yellow!90] (1,6) -- (1,7) -- (6,7) -- (6,6);
    \draw[fill=yellow!30] (1,3) -- (1,6) -- (6,6) -- (6,3);
    
    \draw[fill=blue!90] (4,0) -- (4,3) -- (8,3) -- (8,0);
    \draw[fill=blue!30] (4,3) -- (4,4) -- (8,4) -- (8,3);
    
    \draw[fill=green!90] (4,2) -- (4,3) -- (6,3) -- (6,2);
    \draw[fill=green!30] (4,3) -- (4,4) -- (6,4) -- (6,3);

    \foreach \i in {\xMin,...,\xMax} {
        \draw [very thin,gray] (\i,\yMin) -- (\i,\yMax);
    }
    \foreach \i in {0,...,2} {
        \draw [decorate,decoration={brace,amplitude=10pt},xshift=-4pt,yshift=0pt] (-0.5,\i*3) -- (-0.5,\i*3 + 3) node [black,midway,xshift=-0.8cm] {$T_{\i}$};
        \draw [very thick, red] (0,\i*3) -- (0,\i*3 + 3) -- (\xMax,\i*3 + 3) -- (\xMax,\i*3) -- (0,\i*3);
    }
    \foreach \i in {\yMin,...,\yMax} {
        \draw [very thin,gray] (\xMin,\i) -- (\xMax,\i);
    }
    \end{tikzpicture}
    \caption[Graph data structure for preferential attachment.]{Graph data structure for preferential attachment.\\ We break the graph into $\sqrt{n}$ groups along the rows and create a segment tree, $T_i$, for each of these groups. We see that any preferential attachment that is done can be broken down into $\sqrt{n}$ parts, where we do updates for each tree, with a total of $\sqrt{n}$ updates. The overlaps are handled efficiently using lazy propagation in the segment tree, which allow us to do the weight updates in $O(\log n)$ for each individual tree.}
    \label{fig:dg_graphds}
\end{figure}

We represent the digraph $G$ using the \textbf{square-root decomposition} along the rows and a \textbf{segment tree} along the columns for each group of $\sqrt{n}$ rows, \cref{fig:dg_graphds}.\\
Each $T_i$ represents a collection of $\sqrt{n}$ rows with a segment tree. An update of a segment $X=[a,b]\times[c,d]$ can span across segment trees. The square root decomposition facilitates fast updates in $O(\sqrt{n}\log{n})$ time.

For each segment tree, it is possible to do \emph{range sum} together with \emph{range multiply} using \emph{lazy propagation} in $O(\log{n})$, \cref{algocf:dg_lazy_mult} and \cref{algocf:dg_lazy_sum}.

To prove that the whole data structure works in the time complexity described above, we break the analysis into two parts. First, we show that in an individual segment tree, we can solve range sum and range multiplication in $O(\log n)$. Then we show how to extend this across rows by grouping multiple trees together.

\subsection{Segment Tree - Lazy Multiply and Sum}

The per row simplification of this problem boils down to

\begin{problem}[Lazy Multiply and Sum]
Given an array of numbers, $\mArray{a}{1}{n}$, perform the following operations in $O(\log n)$
\begin{enumerate}
    \item Find the sum of all numbers in a contiguous range $[i, j]$.
    \item Multiply all numbers in a contiguous range $[i, j]$ by some value $k$.
\end{enumerate}
\end{problem}

\begin{figure}
    \centering
    \begin{tikzpicture}
    \node (1) at (0,0) {$\mArray{a}{1}{n} - \{sum = s_0, mult = m_0\}$};
    \node (2) at (-3,-1) {$\mArray{a}{1}{n/2} - \{ s_1, m_1\}$};
    \node (3) at (3,-1) {$\mArray{a}{n/2 + 1}{n} - \{s_2, m_2 \}$};
    \draw[->] (1) -- (2);
    \draw[->] (1) -- (3);
    \draw[->] (2) -- (-5, -2);
    \draw[->] (2) -- (-1, -2);
    \draw[->] (3) -- (1, -2);
    \draw[->] (3) -- (5, -2);
    \node at (0, -3) {$\vdots$};
    \node at (-6, -4) {\large [};
    \node at (6, -4) {\large ]};
    \draw[dotted] (-5.5, -4.2) -- (5.5, -4.2);
    \end{tikzpicture}
    \caption[Segment tree for lazy propagation.]{Segment tree for lazy propagation.\\ For each node in the segment tree, we store the subtree range along with two additional parameters, current sum and propagated multiplicand. The multiplicand is not propagated fully until a  subrange is updated in a subsequent query or until a subrange weight is queried. And even in those cases, we only propagate it optimally with lazy propagation.}
    \label{fig:dg_segmenttree}
\end{figure}

We solve this by creating a segment tree of nodes, \cref{fig:dg_segmenttree} ,with each node containing the following metadata:

\begin{itemize}
    \item \emph{low, high} - the lower and upper end points of the range of the node
    \item \emph{sum} - current sum of all values in the range of the node
    \item \emph{mult} - current multiplicand not yet propagated to lower nodes
    \item \emph{left, right} - left and right children nodes of current node
\end{itemize}

\begin{algorithm}
  \caption{Segment Tree - Lazy Multiply}
  \label{algocf:dg_lazy_mult}
  \begin{algorithmic}[1]
    \Function{LazyMultiply}{Segment Tree Node $node$, int $l$, int $r$, int $c$}
      \State\Comment{Multiplies the part of the range $[l, r]$ contained inside $node$ by $c$}
      \State\Comment{Time complexity = $O(\log{n})$}
      \If{$node.low > r$ or $node.high < l$}
      \State\Comment{No intersection, nothing to do}
      \State\Return
      \ElsIf{$node.low \ge l$ and $node.high \le r$}
      \State\Comment{Fully contained, multiply current multiplicand by $c$}
      \State $node.mult \asteq c$
      \State $node.sum \asteq c$
      \Else
      \State LazyMultiply($node.left$, $node.low$, $node.high$, $node.mult$)
      \State LazyMultiply($node.right$, $node.low$, $node.high$, $node.mult$)
      \State $node.mult = 1$
      \State LazyMultiply($node.left$, $l$, $r$, $c$)
      \State LazyMultiply($node.right$, $l$, $r$, $c$)
      \State $node.sum = node.left.sum + node.right.sum$
      \EndIf
    \EndFunction
  \end{algorithmic}
\end{algorithm}

The lazy multiplication algorithm, \cref{algocf:dg_lazy_mult}, works by manipulating the $mult$ parameter to keep track of the accumulated multiplication and only propagating it when intersecting ranges are updated. 

\begin{theorem}[Lazy Multiply]
\label{thm:dg_thm_lm}
The running time of \cref{algocf:dg_lazy_mult} is $O(\log{n})$ for all ranges $[l, r]$.
\end{theorem}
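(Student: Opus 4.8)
The plan is to run the standard segment-tree recursion analysis: bound the number of nodes the recursion ever reaches by showing that at each depth only a constant number of nodes fall into the branching case, and that every visited node costs $O(1)$. First I would fix the structure. The segment tree is a balanced binary tree on the index set $[1,n]$ whose root owns $[1,n]$ and whose two children split a node's range in half, so it has height $O(\log n)$, and at each depth $d$ the node ranges form a partition of $[1,n]$ into disjoint contiguous blocks. I would then classify every node that \texttt{LazyMultiply} touches by which branch of the conditional it triggers: \emph{disjoint} ($[node.low, node.high]\cap[l,r]=\emptyset$), which returns in $O(1)$; \emph{contained} ($[node.low,node.high]\subseteq[l,r]$), which performs two multiply-assignments in $O(1)$ and does not recurse; and \emph{straddling} (proper partial overlap), the only branch that recurses on children.

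The heart of the argument is the claim that at each depth $d$ at most two nodes are straddling. A straddling node $[lo,hi]$ satisfies $[lo,hi]\cap[l,r]\neq\emptyset$ together with $lo<l$ or $hi>r$; combining these shows it contains the boundary point $l$ (when $lo<l\le hi$) or the boundary point $r$ (when $lo\le r<hi$) inside its range. Since the depth-$d$ ranges are pairwise disjoint, exactly one can contain $l$ and exactly one can contain $r$, so every straddling node is one of these at most two ranges (possibly a single range containing both $l$ and $r$). Hence there are $O(\log n)$ straddling nodes in total. Because \texttt{LazyMultiply} issues a recursive call only inside the straddling branch, a node is reached only when its parent straddled, so the number of visited nodes is at most a constant times the number of straddling nodes plus the root, i.e.\ $O(\log n)$.

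Finally I would confirm that each visited node performs only $O(1)$ work. The disjoint and contained cases are manifestly constant time. For a straddling node, the two push-down calls pass the node's own range $[node.low,node.high]$ as the argument $[l,r]$; since each child's range is a sub-interval of $[node.low,node.high]$, each push-down call immediately satisfies the containment test, runs its $O(1)$ contained branch, and terminates without further recursion. Thus the push-down is $O(1)$ and spawns no additional descent. Summing $O(1)$ over the $O(\log n)$ visited nodes yields the claimed $O(\log n)$ running time, uniformly over all $[l,r]$.

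I expect the main obstacle to be phrasing the "at most two straddling nodes per depth" claim cleanly rather than any deep difficulty. In particular I would need to handle the degenerate case in which a single node contains both $l$ and $r$ (so that the two boundary-owning ranges coincide and the count is one, not two), and to note the base case that a leaf has $lo=hi$ and therefore can never straddle — which is precisely why the else-branch never dereferences a nonexistent child and the recursion is well defined down to the leaves.
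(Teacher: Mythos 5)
Your proof is correct and follows essentially the same route as the paper's: the standard segment-tree argument that only $O(1)$ nodes per level participate non-trivially in the recursion (you count the at most two straddling nodes per depth that contain a boundary point $l$ or $r$; the paper dually counts the at most two fully-contained stopping nodes per level), giving $O(\log n)$ visited nodes at $O(1)$ work each. Your version is in fact a bit more careful than the paper's, since you explicitly verify that the two push-down calls pass the node's own range and therefore terminate immediately in the containment branch, a point the paper's proof leaves implicit.
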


\begin{proof}
Notice that the recursion stops at any node at which the range of the node is full contained inside the range to be updated. For example, if our tree is on $[1,\ldots, 8]$, we will have $15$ total nodes, \[[1, 8], [1, 4], [5, 8], [1, 2], [3, 4], [5, 6], [7, 8], [1,1], \ldots, [8,8]\].
If we wish to do a range multiplication on $[4, 7]$, the algorithm will stop at the top most nodes possible which unify up to our desired range, which in this case - $[4,4], [5,6], [7,7]$, which is strictly smaller than $4$, the range to be updated. 

Notice that we will only ever update a maximum of two nodes of the same length, and they will never be neighbours, as the recursion would instead stop at the parent node. Hence, to represent our range $[l, r]$, as a unification of $k$ ranges - $[l, r] = \cup_{i=1}^k [x_i, y_i]$, we can only have $k < 2\log n$, as the maximum size of a node's subrange is $n/2$.

This argument shows that we will only ever visit $O(\log n)$ nodes, proving our running time proposition.
\end{proof}

We can reach the desired worst case of $\sim2\log n$ for a tree of range $[1, 2^n]$ and a range update for $[2, 2^n-1]$.

\begin{algorithm}
  \caption{Segment Tree - Lazy Sum}
  \label{algocf:dg_lazy_sum}
  \begin{algorithmic}[1]
    \Function{LazySum}{Segment Tree Node $node$, int $l$, int $r$}
      \State\Comment{Returns the sum of the part of the range $[l, r]$ in $node$}
      \State\Comment{Time complexity = $O(\log{n})$}
      \If{$node.low > r$ or $node.high < l$}
      \State\Comment{No intersection}
      \State\Return 0
      \ElsIf{$node.low \ge l$ and $node.high \le r$}
      \State\Comment{Fully contained}
      \State\Return $node.sum$
      \Else
      \State LazyMultiply($node.left$, $node.low$, $node.high$, $node.mult$)
      \State LazyMultiply($node.right$, $node.low$, $node.high$, $node.mult$)
      \State $node.mult = 1$
      \State\Return LazySum($node.left$, $l$, $r$) + LazySum($node.right$, $l$, $r$)
      \EndIf
    \EndFunction
  \end{algorithmic}
\end{algorithm}

\begin{theorem}[Lazy Sum]
The running time of \cref{algocf:dg_lazy_sum} is $O(\log n)$ all ranges $[l, r]$.
\end{theorem}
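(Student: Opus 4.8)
The plan is to reduce this to the running-time analysis already carried out for \cref{thm:dg_thm_lm}, since \cref{algocf:dg_lazy_sum} has the same branching structure as \cref{algocf:dg_lazy_mult}. First I would observe that the recursion in \textsc{LazySum} terminates at exactly the same class of nodes: a node is either disjoint from $[l,r]$ (the first branch, returning $0$), fully contained in $[l,r]$ (the second branch, returning its stored \emph{sum}), or straddles one of the two endpoints of $[l,r]$ (the \textbf{Else} branch, which recurses on both children). As in the proof of \cref{thm:dg_thm_lm}, at most two nodes per level of the tree can straddle an endpoint — one for $l$ and one for $r$ — and a straddling node's siblings are either fully contained or disjoint, so they do not recurse further. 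Since the tree has depth $O(\log n)$, this bounds the number of nodes on which \textsc{LazySum} recurses by $2\log n = O(\log n)$.

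The one new ingredient, compared to the Lazy Multiply proof, is the lazy push-down performed in the \textbf{Else} branch, where \textsc{LazySum} invokes \textsc{LazyMultiply} on each child before recursing. The key point to check is that each such push-down costs only $O(1)$: the range handed to the child is the parent's own range $[node.low, node.high]$, and the child's range is a subinterval of it, so the child falls into the ``fully contained'' branch of \textsc{LazyMultiply} and returns after a single constant-time update of its \emph{mult} and \emph{sum} fields, without recursing further. Hence each visited node still performs only $O(1)$ work, even accounting for the propagation.

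Combining the two observations — $O(\log n)$ visited nodes and $O(1)$ work per node — yields the claimed $O(\log n)$ bound for every range $[l,r]$. The main obstacle, such as it is, is precisely the verification that the push-down does not itself recurse; once one notes that the propagated range is the parent's own range and hence always fully contains each child, the per-node cost collapses to a constant and the bound follows by the identical counting argument used for \cref{thm:dg_thm_lm}, with no separate amortization required.
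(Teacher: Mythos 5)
Your proof is correct and follows essentially the same route as the paper, which simply notes that the argument is identical to that of \cref{thm:dg_thm_lm}. Your explicit check that the lazy push-down in the \textbf{Else} branch hits the ``fully contained'' case of \textsc{LazyMultiply} and therefore costs only $O(1)$ per visited node is a worthwhile detail that the paper leaves implicit, but it does not change the underlying counting argument.
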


The proof is identical to that of the lazy multiplication, where we only look at the top level nodes.

\subsection{Lazy Attach}

To utilize the previous algorithms for a 2-D structure, we have to make certain modifications. \\
We first create a segment tree for each row and then group them together in groups of size $\sqrt{n}$. For each group, create a parent segment tree, with the same metadata, except that for each node, store the \textbf{parent.sum = $\sum_{i\in \text{rows}}$ row.sum}. The parent tree stores the aggregate information across all rows, which allows us to do range queries across the whole group in $O(\log n)$. \\
It is vital to note that this is only for the whole group and not a subset of the group. To query for a subset of the group, we have to go to each individual row and query its segment tree.
Due to the fact that each group is of size $\sqrt{n}$, we are guaranteed that each operations only touches a maximum of $\sqrt{n}$ groups, out of which only two groups need to ever do individual row updates, as show in \cref{algocf:dg_lazy_attach}.

\begin{algorithm}
  \caption{Preferential Attachment on a Digraph}
  \label{algocf:dg_lazy_attach}
  \begin{algorithmic}[1]
    \Function{PreferentialAttach}{Digraph $G$, Segment $X$, Weight k}
      \State\Comment{Modifies $G$ in place by attaching the 2-D segment $X = [a, b] \times [c, d]$}
      \State\Comment{Time complexity = $O(\sqrt{n}\log{n})$}
      \For{$T_i \in [1, \sqrt{n}]$}
      \State LazyMultiply($T_i.root, 1, n, 1-k$)
      \EndFor
      \For{$T_i \in [a, b]$}
      \State\Comment{These trees are \textbf{fully} inside the range and can be updated as a group}
      \State LazyMultiply($T_i.root$, $c$, $d$, k)
      \EndFor
      \For{boundary trees $T_a$ and $T_b$}
      \State\Comment{These two trees have partial intersection with the range $[a, b]$ and must be updated manually}
      \State\Comment{Each individual row is also represented as a segment tree}
      \For{$R_i \in T_a, T_b$ and $i \in [a, b]$}
      \State LazyMultiply($R_i.root$, $c$, $d$, 2)      
      \EndFor
      \EndFor
    \EndFunction
  \end{algorithmic}
\end{algorithm}

\subsubsection{Real World Optimizations}

There are some factors that can be considered for optimizations.

\begin{itemize}
    \item If it is known that segment sizes to be updated are within a certain bounded width $w$, then it is possible to create segment groups of size $w$ instead of $\sqrt{n}$, this implies, that there will only ever be $O(w\log n)$ maximum time.
    \item For really small width segments, it can also be possible to use data structures such as a Quad Tree or a k-D tree.
    \item If working with large segments, but a small number of such segments, it is beneficial to look at binary space partition (BSP) trees and pre-partition segments into non intersecting parts. This can quickly become complex, with a growth in number of segments. BSPs are used in computer vision to do segmentation, which allows us to use highly optimized implementations, if such a scenarios is feasible.
\end{itemize}

	\section{Learning and Inference}
\label{sec:dg_learn}

Sampling of a digraphon is done by the Chinese Restaurant Process(CRP), which is a staple tool to model the Dirichlet distribution\cite{teh2010dirichlet,maceachern1998estimating}. The CRP process aims to model how people are assigned tables when sitting at a shared seating restaurant. With higher probability, people wish to sit next to others for a more pleasant experience, while with a lower probability, they wish to get a new table. This is formalized as follows.

Let $\alpha\in(0,1)$ be a hyperparameter. At any point of time $n$, let us have $k$ groups of size $[g_1,\ldots,g_k]$, which are a partition of $[1, \ldots, n]$. At time $n+1$, we wish to assign a group to the element $n+1$ which is done as follows:
\begin{itemize}
    \item With probability $\frac{(1-\alpha)|g_i|}{n+1}$, $n+1$ is assigned to group $i$.
    \item With probability $\frac{\alpha}{n+1}$, $n+1$ is assigned to a new group $g_{k+1}$
\end{itemize}

As we scale $n\to\infty$, we achieve a distribution of the set $\bbN$, which is the Dirichlet distribution with scaling parameter $\alpha$. The CRP representation is very useful convenient when performing finite sampling and parameter estimation. Another advantage of the CRP representation is the ease of computation and the fast simulations, which are important for larger models.

\subsection{Sampling a Digraphon}

Similar to Bayesian statistical models, we need a generative model for a digraphon to be able to get insights using parameter estimation. The most common generative models used are the Dirichlet prior, which do leverage using the CRP model.\\
Let $\alpha$ be the hyper parameter affecting the CRP model to get the clustering assignment of the vertices and let $\beta$ be another hyper parameter for the standard Dirichlet process, such as the gamma representation.\\
The generative model for the digraphon is as follows, where we generate a digraph on $n$ vertices:

\begin{enumerate}
    \item Draw clustering assignments, $\zeta$, for each vertex,
    \[ \zeta \sim CRP(\alpha) \]
    \item Draw weights for the edges, for each pair of groups $r\ne s$
    \[ \eta_{r, s} | \beta \sim \text{Dirichlet}(\beta)\]
    \item Set the edge using the measure of the partition
    \[ \calG_{ij} = \text{Categorical}(\eta_{\zeta_i, \zeta_j)} \]
\end{enumerate}

We see that this is an exchangeable process as well, as the clustering assignments are generated irrespective of the actual labels of the vertices. This reasoning implies that as the number of vertices $n\to\infty$, the generated digraph converges to the digraphon.

\subsection{MAP inference}

Let $\calG$ be the final digraph generated from the digraphon generative model. Our aim is to infer the weights $\eta_{r,s}$.

The likelihood that $\calG$ is sampled is given by 

\[ \Pr(\calG|\zeta) = \prod_{r \ne s}(\zeta_{r, s})^{m_{r,s}}\]

where $m_{r,s}$ denotes the number of edges from cluster $r$ to cluster $s$ and we assume no self loops for simplicity.

\[ \Pr(\zeta | \alpha) = \frac{\prod_{r\ne s}(\zeta_{r,s})^{(\alpha - 1)}}{\mathbf{B}(\alpha)} \]

where $\mathbf{B}(\omega) = \frac{\prod_{i}\Gamma(\omega_i)}{\Gamma(\sum_i \omega_i)}$ is the multivariate beta function.

\begin{align*}
    \Pr(\calG|\alpha) &= \nIntegral{}{}{\Pr(\calG|\zeta)\cdot\Pr(\zeta|\alpha)} \\
    &= \nIntegral{}{}{\frac{\prod_{r \ne s}(\zeta_{r, s})^{m_{r,s}}\prod_{r \ne s}(\zeta_{r,s})^{(\alpha - 1)}}{\mathbf{B}(\alpha)}}\\
    &= \frac{1}{\mathbf{B}(\alpha)}\prod_{r\ne s}\mIntegral{}{}{\zeta_{r,s}^{m_{r,s} + \alpha - 1}}\zeta_{r,s} \\
    &= \frac{1}{\mathbf{B}(\alpha)}\prod_{r \ne s}\mathbf{B}(m_{r,s} + \alpha)
\end{align*}

We can remove the constant $\frac{1}{\mathbf{B}(\alpha)}$ and maximize the negative log likelihood \\$\sum_{r \ne s} \log\mathbf{B}(m_{r,s} + \alpha)$ at 
\[ \zeta_{r,s} = \frac{m_{r,s} + \alpha}{\displaystyle\binom{n}{2}\cdot\alpha + \sum_{r\ne s} m_{r,s}} \]

	\section{Discussion}

As final concluding remarks, we have seen how to use digraphons as generative models for directed graphs for evolutionary models. We have further developed a robust modeling data structure for fast simulations which is easily extendable for other evolutionary mechanisms, as the data structure allows for generic operations which can be used for other world models. This opens up further testing grounds for hypothesis checking by comparing simulations to real world dynamics.

There are many extensions that have yet to be explored using the theory of digraphons, of which an important one is the use of the various distance metrics on the digraphon space. There are many metrics, such as the ones induced by the $L_p$ norm , nuclear norm, and the more interesting cut-distance, which are used. Some of the more interesting questions are presented below.

(1) Can we use the metric on the digraphon space to measure similarity of models for two distinct populations?\\
An example of such a scenario would be, given data for two distinct populations, we wish to know if they have evolved from a common ancestor population. If so, how far back did they diverge? Can we quantify the divergence of populations? Even more thoroughly, can we find the evolutionary pathway used by the populations to reach the current state?

(2) Do the metrics induce a EM-type convergence for learning algorithms?\\
Given multi-dimensional data for a population, we wish to stratify it into sub-populations with individuals having a similar evolutionary patterns. This problem is reminiscent of k-means clustering where we wish to use the digraphon metric to perform the clustering. There has been extensive work in the analysis of Euclidean norms for showing convergence (if only to a local minima for the error function), which would be important to translate to the digraphon spaces. The work by Lovász, et al.,\cite{lovasz2012nondeterministic} has shown many convergence results which may be helpful for such scenarios.

	\clearpage
	\bibliography{references}
	\bibliographystyle{plain}

\end{document}